\colorlet{linkequation}{blue}
\newtheorem{thm}{Theorem}[section]
\newtheorem{cor}[thm]{Corollary}
\newtheorem{lem}[thm]{Lemma}
\newtheorem{defn}[thm]{Definition}
\newcommand{\myparskip}{3pt}
\title{\textbf{A Note on the Representation Power of GHHs}}
\author{
  Zhou Lu\footnote{This work is done during LZ's visit to SQZ institution.}\\
  Princeton University\\
  \texttt{zhoul@princeton.edu}
}
\date{January 2021}
\begin{document}

\maketitle

\begin{abstract}
    In this note we prove a sharp lower bound on the necessary number of nestings of nested absolute-value functions of generalized hinging hyperplanes (GHH) to represent arbitrary CPWL functions. Previous upper bound states that $n+1$ nestings is sufficient for GHH to achieve universal representation power, but the corresponding lower bound was unknown. We prove that $n$ nestings is necessary for universal representation power, which provides an almost tight lower bound. We also show that one-hidden-layer neural networks don't have universal approximation power over the whole domain. The analysis is based on a key lemma showing that any finite sum of periodic functions is either non-integrable or the zero function, which might be of independent interest.
\end{abstract}

\section{Introduction}
We consider the complexity of representing continuous piecewise linear functions using the generalized hinging hyperplane model \cite{wang2005generalization}. We begin with a short review on these two notions.

\subsection{Continuous Piecewise Linear (CPWL) Functions}
Continuous piecewise linear (CPWL) functions play an important role in non-linear function approximation, such as nonlinear circuit or neural networks. We introduce the definition of CPWL functions borrowed from \cite{chua1988canonical}.

\begin{defn}[CPWL function]\label{def:cpwl}
A function $f(x): R^n \to R$ is said to be a CPWL function iff it satisfies:\\

1):The domain space $R^n$ is divided into a finite number of polyhedral regions by a finite number of disjunct boundaries. Each boundary is a subset of a hyperplane and takes non-zero measure (standard lebesgue measure) on the hyperplane (as $R^{n-1}$).

2):The restriction of $f(x)$ on each polyhedral region is an affine function.

3):$f(x)$ is continuous on $R^n$.
\end{defn}

\subsection{Generalized Hinging Hyperplanes (GHH)}
The model of hinging hyperplanes (HH) is a sum of hinges like
\begin{equation}
\pm \max \{w_1^{\top} x+b_1, w_2^{\top} x+b_2 \}
\end{equation}
where $w_1, w_2\in R^n$ and $b_1, b_2\in R$ are parameters. The HH model (in fact equivalent to a one hidden-layer ReLU network) can approximate any continuous function over a compact domain to arbitrary precision as the number of hinges go infinity \cite{breiman1993hinging}.

However, this model can't exactly represent all CPWL function as pointed out in \cite{he2018relu}, which brings doubt on its approximation efficiency. To overcome this problem, \cite{wang2005generalization} first proposed a generalization of HH model, called GHH which allows more than 2 affine functions within the nested maximum operator:

\begin{defn}[$n$-order hinge]
A $n$-order hinge is a function of the following form:
\begin{equation}
\pm \max \{w_1^{\top} x+b_1, w_2^{\top} x+b_2, \cdots, w_{n+1}^{\top} x+b_{n+1}\}
\end{equation}
where $w_i\in R^n$ and $b_i\in R$ are parameters.
\end{defn}

A linear combination of a finite number of $n$-order hinges is called a $n$-order hinging hyperplane ($n$-HH) model. Such model has universal representation power over all CPWL functions, as formalized in the theorem below:

\begin{thm}[Theorem 1 in \cite{wang2005generalization}]\label{thm:upper}
For any positive integer $n$ and CPWL function $f(x): R^n \to R$, there exists a $n$-HH which exactly represents $f(x)$.
\end{thm}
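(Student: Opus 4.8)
The plan is to construct $f$ from the finitely many affine functions that it is pieced together from, and then to compress the resulting lattice expression down to maxima/minima of width $n+1$. I would split the argument into two stages. \emph{Stage one} produces a lattice representation. Let $\ell_1,\dots,\ell_m$ be the distinct affine functions arising as the restrictions of $f$ to the full-dimensional cells of its polyhedral partition. A standard max--min representation argument for CPWL functions gives
\begin{equation}
f(x)=\max_{i\in I}\ \min_{j\in S_i}\ \ell_j(x),
\end{equation}
where $S_i\subseteq\{1,\dots,m\}$ may be taken to be the set of indices $j$ with $\ell_j\ge f$ on a chosen cell $C_i$; continuity of $f$ and the affine structure of the pieces make this identity hold on all of $R^n$. (Equivalently, one may first write $f=g-h$ with $g,h$ convex CPWL, each a plain maximum of affine functions.) Since a maximum or minimum of $k$ affine functions is, up to sign, a $(k-1)$-order hinge, the only remaining task is the \emph{order reduction}: rewrite such a term as a finite signed sum of $n$-order hinges whenever $k>n+1$.

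\emph{Stage two} is this order reduction, which I would carry out by induction on the dimension $n$. The base case $n=1$ is the classical fact that every CPWL $f\colon R\to R$ equals an affine function plus a finite signed sum of ramps $\pm\max\{a_ix+b_i,\,0\}$, each a $1$-order hinge. For the inductive step I would triangulate the partition of $f$ into simplicial cells (bounded simplices and unbounded simplicial cones) and then process the supporting hyperplanes $H_1,\dots,H_N$ of the partition one at a time, at each step subtracting a correction that removes the kink of the current function along one hyperplane $H=\{a^\top x=b\}$. The kink of a CPWL function along $H$ is itself encoded by a CPWL function of one lower dimension living on $H\cong R^{n-1}$, which by the inductive hypothesis is a signed sum of $(n-1)$-order hinges; reassembling these with the sign-change function $a^\top x-b$ attached to the two sides of $H$ is what should raise the order from $n-1$ to $n$, and this is exactly where the ``$+1$'' in $n+1$ is expected to enter. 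Summed over all $H_j$ and over the finitely many simplices, and arranged so that the corrections telescope, this should leave only an affine remainder and exhibit $f$ as a finite signed sum of $n$-order hinges.

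The step I expect to be the real obstacle is reconciling the width bound $n+1$ with the fact that \emph{many} of the affine pieces $\ell_j$ can be simultaneously active at a low-dimensional face: at a vertex of the partition, arbitrarily many of them may agree, so the purely algebraic moves---nesting maxima via $\max\{u,\ell\}=u+\max\{\ell-u,\,0\}$, or distributing $\max$ over $\min$---either fail to lower the order or inflate the number of terms without control, and the kink coefficient along $H$ in Stage two varies along $H$, so the reassembly there cannot be a naive product. The way through has to be geometric: one uses that the local behaviour of a CPWL function near any point of $R^n$ is determined, via a Carath\'eodory/Radon-type argument on the fan of linearity regions of $f$ at that point, by at most $n+1$ of the active affine functions at a time, so that a sufficiently fine simplicial refinement makes every local piece an honest $n$-order hinge. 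A secondary but pervasive technical point is the bookkeeping for the unbounded cells, so that the final expression represents $f$ on all of $R^n$ and not merely on a compact region.
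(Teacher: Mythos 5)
You are being asked to prove a statement that the note itself does not prove: Theorem \ref{thm:upper} is quoted from \cite{wang2005generalization}, so the relevant comparison is with that proof. Your Stage one is fine and matches the known route: the lattice representation $f=\max_i\min_{j\in S_i}\ell_j$ (equivalently, a difference-of-convex / signed-sum-of-maxima decomposition obtained by flattening it) is a standard first step. The genuine gap is Stage two, which is where the entire content of the theorem sits and which you yourself flag as ``the real obstacle'' without closing it. Concretely, the kink of $f$ along a hyperplane $H=\{a^{\top}x=b\}$ has an amplitude that is a nonconstant CPWL function $c$ on $H\cong R^{n-1}$, so the correction you would need to subtract behaves near $H$ like $c(\pi_H(x))\cdot|a^{\top}x-b|$, a \emph{product} of piecewise linear quantities; it is not obtained by ``attaching'' $a^{\top}x-b$ to an $(n-1)$-order hinge representation of $c$, and making that reassembly precise is essentially the theorem itself, so the induction on dimension as described is circular or incomplete.

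The proposed geometric rescue does not repair this. At a vertex of the partition where $m>n+1$ affine pieces meet, the local cone of $f$ genuinely depends on all $m$ pieces, and no simplicial refinement changes which affine functions are active at that vertex; so the claim that ``a sufficiently fine simplicial refinement makes every local piece an honest $n$-order hinge'' is unsubstantiated, and false if read as a statement about the germ of $f$ at such a vertex. What the dimension count actually buys --- and what the proof in \cite{wang2005generalization} exploits --- is algebraic, not geometric: the space of affine functions on $R^n$ has dimension $n+1$, hence any $n+2$ of the $\ell_i$ are linearly dependent, and from a dependence $\sum_i\lambda_i\ell_i=0$ one derives an identity rewriting $\max\{\ell_1,\dots,\ell_m\}$ with $m\ge n+2$ as a finite signed sum of maxima over proper subsets; induction on $m$ then drives every width down to $n+1$, and combined with your Stage one this proves the theorem. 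Without this reduction identity (or an equivalent substitute) your outline does not establish the bound $n+1$.
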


The question is whether we can give a sharp lower bound on the necessary number of affine functions within the nested maximum operator. \cite{wang2005generalization} conjected that $(n-1)$-HH can't represent all CPWL functions, but this open problem is left unanswered for more than a decade. In the following section we will prove our main result that $(n-2)$-HH can't represent all CPWL functions, yielding an almost tight lower bound.

\section{Main Result}
Observe that any $(n-2)$-order hinge depends on only $n-1$ affine transforms of $x$, thus there always exists a direction in which the value of the $(n-2)$-order hinge remains the same. We make such observation precise by introducing the definition of low-dimensional and periodic functions.

\begin{defn}[Low-dimensional/periodic function]
A function $f(x): R^n \to R$ is said to be low-dimensional, if there exists a vector $v\ne 0$, such that for any $x\in R^n$ and $c\in R$, we have that $f(x)=f(x+cv)$. If we have only $f(x)=f(x+v)$ then $f$ is said to be periodic (a weaker notion). $v$ is called an invariant vector of $f$.
\end{defn}

Any $(n-2)$-order hinge is a low-dimensional function on $R^n$, so our problem is reduced to proving the class of finite sum of low-dimensional functions has limited representation power. The following key lemma actually proves (a stronger result) that finite sum of periodic functions can't represent any non-trivial integrable functions.

\begin{lem}\label{lem:key}
Any finite sum of periodic functions is either non-integrable or the zero function, i.e. given periodic functions $f_i(x)$, $i=1,...,m$, then $f(x)\triangleq \sum_{i=1}^m f_i(x)$ satisfies
\begin{equation}
    \int_{R^n} |f| =\infty \quad or \quad f\equiv 0
\end{equation}
\end{lem}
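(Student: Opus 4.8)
The plan is to induct on the number $m$ of summands, peeling off one periodic function at a time by a finite--difference operator and closing the induction with a one--dimensional slicing argument. I assume the $f_i$ are measurable (so that $\int_{R^n}|f|$ is meaningful) and will prove the contrapositive in the form: \emph{if $\int_{R^n}|f|<\infty$ then $f=0$ almost everywhere}. When the $f_i$ are continuous --- the case relevant to $(n-2)$-order hinges in the application --- this upgrades at once to $f\equiv 0$, since a continuous function vanishing off a null set vanishes identically.

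The algebraic engine is the difference operator $(\Delta_v g)(x)\triangleq g(x+v)-g(x)$. Two elementary facts drive the induction. First, if $g$ is periodic with invariant vector $w$ then so is $\Delta_v g$, since $(\Delta_v g)(x+w)=g(x+v+w)-g(x+w)=g(x+v)-g(x)$. Second, if $w$ is itself an invariant vector of $g$ then $\Delta_w g\equiv 0$. Moreover $\int_{R^n}|\Delta_v g|\le 2\int_{R^n}|g|$ by translation invariance of Lebesgue measure, so $\Delta_v$ preserves integrability.

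I would first settle the case of a single periodic summand, which also serves to close the induction: if $h:R^n\to R$ is integrable and $h(x+v)=h(x)$ for a.e.\ $x$, with $v\ne 0$, then $h=0$ a.e. Apply an invertible linear change of variables carrying $v$ to the first coordinate direction (this only rescales $\int|h|$), so that $h(x_1+1,x_2,\dots,x_n)=h(x_1,\dots,x_n)$ a.e. By Fubini, for almost every $y=(x_2,\dots,x_n)$ the slice $t\mapsto h(t,y)$ is integrable on $R$ and $1$-periodic for a.e.\ $t$; hence $\int_R|h(t,y)|\,dt=\sum_{k\in\mathbb{Z}}\int_0^1|h(s,y)|\,ds$, which is finite only if $\int_0^1|h(s,y)|\,ds=0$, forcing the slice to be a.e.\ $0$. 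This holds for a.e.\ $y$, so $h=0$ a.e. For the inductive step, suppose the claim holds for $m-1$ periodic summands and let $f=\sum_{i=1}^m f_i$ with $\int_{R^n}|f|<\infty$, each $f_i$ periodic with invariant vector $v_i$. Put $g\triangleq\Delta_{v_m}f$. The term $\Delta_{v_m}f_m$ vanishes by the second fact, so $g=\sum_{i=1}^{m-1}\Delta_{v_m}f_i$ is, by the first fact, a sum of $m-1$ periodic functions, and it is integrable by the bound above. The inductive hypothesis gives $g=0$ a.e., i.e.\ $f$ is a.e.\ periodic with the single invariant vector $v_m$; the single-summand case then forces $f=0$ a.e.

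The step I would write out carefully is the measure-theoretic bookkeeping in the slicing argument --- that the null set where a.e.-periodicity may fail meets almost every fiber in a null set, and that such ``a.e.''\ qualifiers are not eroded as the induction iterates; everything else reduces to the one line ``$\Delta_{v_m}$ annihilates $f_m$ and keeps the remaining $f_i$ periodic'', which is what makes the number of summands drop. For completeness I note a Fourier-analytic alternative: a periodic locally integrable function is a tempered distribution whose Fourier transform is supported on a countable union of parallel hyperplanes, hence on a Lebesgue-null set, so if $f\in L^1(R^n)$ then $\widehat f$ is a continuous function vanishing on a dense set and $f=0$ a.e.; the elementary induction, however, is self-contained and avoids the distributional technicalities.
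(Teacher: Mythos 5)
Your proof follows essentially the same route as the paper's: the same induction on the number of summands, the same difference operator $g(x)=f(x+v_m)-f(x)$ that annihilates $f_m$ while keeping the other summands periodic and integrable, and the same Fubini/slicing argument for the single-summand base case. The only difference is cosmetic but welcome: you track the ``almost everywhere'' qualifiers explicitly (concluding $f=0$ a.e.\ for merely measurable summands, upgraded to $f\equiv 0$ in the continuous case relevant to hinges), which is a point the paper's terser argument glosses over.
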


\begin{proof}
We prove Lemma \ref{lem:key} by induction. Suppose each $f_i$ has an invariant vector $v_i$, base case $m=1$ is trivial since if we denote the orthogonal hyperplane $H_i=\{x|x^{\top} v_i=0\}$, we have
\begin{equation}
    \int_{R^n} |f|=\int_R \int_{H_1} |f|
\end{equation}
thus $\int_{R^n} |f|<\infty$ if and only if $\int_H |f|=0$. Assume $f=\sum_{i=1}^m f_i$ is integrable, then $g(x)\triangleq f(x+v_m)-f(x)$ is also integrable. We make the following decomposition of $g$:
\begin{equation}
    g(x)=\sum_{i=1}^m f_i(x+v_m)-f_i(x)=\sum_{i=1}^{m-1} f_i(x+v_m)-f_i(x)
\end{equation}
where each $f_i(x+v_m)-f_i(x)$ is periodic (with invariant vector $v_i$) as well. By induction we have $g\equiv 0$ and $f$ is also a periodic function (with invariant vector $v_m$). Using the base case on $f$ again concludes our proof.
\end{proof}

Our main result is a direct corollary of Lemma \ref{lem:key}, as stated below:
\begin{thm}\label{thm:main}
For any positive integer $n\ge 2$, there exists a CPWL function $g(x): R^n \to R$, such that no $(n-2)$-HH can exactly represent $g(x)$.
\end{thm}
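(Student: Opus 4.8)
The plan is to derive Theorem~\ref{thm:main} as an essentially immediate consequence of Lemma~\ref{lem:key}, via two facts: (i) every $(n-2)$-order hinge is periodic, so every $(n-2)$-HH is a finite sum of periodic functions; and (ii) there exists a nonzero CPWL function $g\colon R^n\to R$ with $\int_{R^n}|g|<\infty$. Granting both, suppose some $(n-2)$-HH $h$ exactly represents $g$, i.e.\ $h\equiv g$ on $R^n$. Then by (i) the function $h$ is a finite sum of periodic functions, so Lemma~\ref{lem:key} forces $\int_{R^n}|h|=\infty$ or $h\equiv 0$. But $\int_{R^n}|h|=\int_{R^n}|g|<\infty$ and $h\not\equiv 0$ by (ii) --- a contradiction. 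Hence no $(n-2)$-HH represents $g$.

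\emph{Fact (i).} A $(n-2)$-order hinge has the form $\pm\max\{w_1^\top x+b_1,\dots,w_{n-1}^\top x+b_{n-1}\}$ and so involves only $n-1$ weight vectors in $R^n$. Since $\mathrm{span}\{w_1,\dots,w_{n-1}\}$ has dimension at most $n-1<n$, its orthogonal complement contains some $v\ne 0$, and then $w_i^\top(x+cv)+b_i=w_i^\top x+b_i$ for all $i$, $x$, $c$; thus the hinge is low-dimensional with invariant vector $v$, in particular periodic. A $(n-2)$-HH is a finite linear combination $\sum_j c_j H_j$ of such hinges, and each $c_j H_j$ is periodic with the same invariant vector as $H_j$. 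Note that the $H_j$ need not share a common invariant direction, which is exactly why the lemma is needed in its ``finite sum of periodic functions'' form rather than a ``low-dimensional'' form.

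\emph{Fact (ii).} I would take $g$ to be a compactly supported tent, e.g.\ $g(x)=\max\{0,\,1-\|x\|_1\}$ with $\|x\|_1=\sum_{i=1}^n|x_i|$, and check it against Definition~\ref{def:cpwl}. Writing $|x_i|=\max\{x_i,-x_i\}$ exhibits $1-\|x\|_1$ as a concave CPWL function, and composing with $\max\{0,\cdot\}$ preserves CPWL-ness; explicitly $R^n$ splits into the $2^n$ simplices $\{x:\epsilon_i x_i\ge 0\ \forall i,\ \sum_i\epsilon_i x_i\le 1\}$ for $\epsilon\in\{\pm1\}^n$, on each of which $g$ is affine, together with $\{\|x\|_1\ge 1\}$ on which $g\equiv 0$ --- finitely many polyhedral regions whose pairwise boundaries lie on hyperplanes and have positive $(n-1)$-dimensional measure. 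The function $g$ is continuous on $R^n$, satisfies $g(0)=1$ (so $g\not\equiv 0$), and vanishes outside the bounded set $\{\|x\|_1\le 1\}$, whence $\int_{R^n}|g|<\infty$.

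I expect no real obstacle: the conceptual content all sits in Lemma~\ref{lem:key}, fact (i) is a one-line dimension count, and the final contradiction is automatic. The only step requiring any care is fact (ii) --- confirming that the tent witness satisfies every clause of Definition~\ref{def:cpwl}, in particular that the induced polyhedral decomposition is finite and that its boundaries have nonzero measure on the hyperplanes that contain them.
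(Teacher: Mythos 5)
Your proposal is correct and follows essentially the same route as the paper: exhibit an integrable, nonzero CPWL tent function, observe that every $(n-2)$-order hinge is low-dimensional hence periodic, and apply Lemma~\ref{lem:key} to rule out exact representation. The only difference is cosmetic --- you use $\max\{0,1-\|x\|_1\}$ where the paper uses $\max\{0,1-\|x\|_{\infty}\}$ --- and your verification of facts (i) and (ii) is if anything more detailed than the paper's.
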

\begin{proof}
Let $g(x)\triangleq \max\{0, 1-||x||_{\infty}\}$. It's straightfoward to check that $g(x)$ is a CPWL function with at most $2^{n+1}$ affine polyhedral regions, and meanwhile is an integrable function with positive integral. As any $(n-2)$-HH can be written as a finite sum of low-dimensional functions, it can't represent $g(x)$ by Lemma \ref{lem:key}.
\end{proof}

Theorem \ref{thm:main} implies that in order to achieve universal representation power over all CPWL functions, a $(n-1)$-HH model is necessary which provides an almost tight lower bound corresponding to the upper bound in Theorem \ref{thm:upper}.

\section{Implications on Universal Approximation of ANNs}
Traditional universal approximation theorems of artifical neural networks (ANN) \cite{cybenko1989approximation, hornik1989multilayer, barron1994approximation} typically states that an ANN with one hidden layer and unbounded width can approximate any measurable function with arbitrary precision on a compact set. Our result demonstrates that the compact set assumption is indeed necessary for ANNs with traditional activation (composition of an affine transform and a fixed univariate function $\sigma$):
\begin{cor}\label{cor:NN}
Given an integrable function $f$ on $R^n$ ($n\ge2$), for any one-hidden-layer neural network $g$ with traditional activation $\sigma(w^{\top}x+b)$, we have that
\begin{equation}
    \int_{R^n} |f-g|=\infty \quad or \quad \int_{R^n} |f-g|=\int_{R^n}|f|
\end{equation}
\end{cor}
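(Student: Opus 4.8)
The plan is to reduce Corollary \ref{cor:NN} to Lemma \ref{lem:key} by observing that a one-hidden-layer network with traditional activation is itself a finite sum of periodic functions. Write $g(x)=\sum_{i=1}^m c_i\,\sigma(w_i^{\top}x+b_i)$ (an output bias, if one is included, merely adds one further constant term and changes nothing below). Each summand $c_i\,\sigma(w_i^{\top}x+b_i)$ depends on $x$ only through the scalar $w_i^{\top}x$. Since $n\ge 2$, the orthogonal complement of $w_i$ contains a nonzero vector $v_i$ (and if $w_i=0$ the summand is constant, so any $v_i\ne 0$ works); then $c_i\,\sigma(w_i^{\top}(x+v_i)+b_i)=c_i\,\sigma(w_i^{\top}x+b_i)$, so each summand is low-dimensional, hence periodic. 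Thus $g$ is a finite sum of periodic functions.

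Next I would run the dichotomy. If $\int_{R^n}|f-g|=\infty$ we are in the first case and there is nothing to prove. Otherwise $f-g$ is integrable; since $f$ is integrable by hypothesis, $g=f-(f-g)$ is a difference of two integrable functions and therefore integrable (and measurable, being the difference of measurable functions). Applying Lemma \ref{lem:key} to the finite sum of periodic functions $g$, integrability forces $g\equiv 0$, whence $\int_{R^n}|f-g|=\int_{R^n}|f|$, which is the second case.

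The only point that needs care is the role of the hypothesis $n\ge 2$: it is exactly what guarantees that every ``traditional'' neuron $x\mapsto\sigma(w_i^{\top}x+b_i)$ possesses a nonzero invariant vector, a statement that fails in dimension one, where a nonzero linear functional has trivial kernel. Everything else is bookkeeping — confirming that constants and the optional output bias are periodic, and that measurability and integrability are preserved under finite sums and differences — so I do not expect any substantive obstacle beyond making the implication ``depends on a single linear functional $\Rightarrow$ low-dimensional'' precise.
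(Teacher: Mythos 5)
Your proposal is correct and follows the paper's own route: each unit $\sigma(w_i^{\top}x+b_i)$ is low-dimensional (hence periodic) when $n\ge 2$, so $g$ is a finite sum of periodic functions, and Lemma \ref{lem:key} applied to $g$ (integrable whenever $f-g$ is, since $f$ is) forces $g\equiv 0$ in the finite-integral case. You have merely made explicit the bookkeeping (the dichotomy and the integrability of $g$) that the paper leaves implicit.
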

\begin{proof}
Any unit $\sigma(w^{\top}x+b)$ is obviously a low-dimensional function when $n\ge 2$, thus by Lemma \ref{lem:key} we finish our proof.
\end{proof}
Corollary \ref{cor:NN} reveals a fundamental gap of representation power between one-hidden layer neural networks and deeper ones, as Theorem \ref{thm:upper} indicates a neural network with $\lceil log_2(n+1)\rceil$ hidden layers can represent any CPWL function \cite{he2018relu}, showing the benefits of depth in universal approximation \cite{lu2017expressive}.

\section{Conclusion}
In this note we give a sharp lower bound on the necessary number of nestings of nested absolute-value functions of generalized hinging hyperplanes (GHH) to represent arbitrary CPWL functions, which is the first non-trivial lower bound to the best of our knowledge. Our results fully characterizes the representation power (and limit) of the GHH model.

Our result also has implications on ANNs, a much more popular model in machine learning. It shows that one-hidden-layer neural networks with traditional activation can't control the approximation error on the whole domain despite existing universal approximation theorems, a fundamental gap between one-hidden-layer networks and deeper ones. We conject similar depth-separation results should hold for deeper networks and the $\lceil log_2(n+1)\rceil$ bound should be tight in representing CPWL functions. Instead of low-dimensional (periodic), other properties need to be discovered for deeper networks.

\section*{Acknowledgements}
The author would like to thank Fedor Petrov for giving an elegant proof of Lemma \ref{lem:key} on Mathoverflow.

\bibliography{Xbib}
\bibliographystyle{plainnat}

\end{document}